\newtheorem{theorem}{Theorem}
\newtheorem{definition}{Definition}
\definecolor{lime}{HTML}{A6CE39}
\DeclareRobustCommand{\orcidicon}{
	\begin{tikzpicture}
		\draw[lime, fill=lime] (0,0) 
		circle [radius=0.16] 
		node[white] {{\fontfamily{qag}\selectfont \tiny ID}};    \draw[white, fill=white] (-0.0625,0.095) 
		circle [radius=0.007];    \end{tikzpicture}
	\hspace{-2mm}}
\xdef\csname orcid\x\endcsname{\noexpand\href{https://orcid.org/\csname orcidauthor\x\endcsname}{\noexpand\orcidicon}}
\begin{document}

\title{Meta-Computing Enhanced Federated Learning in IIoT: Satisfaction-Aware Incentive Scheme via DRL-Based Stackelberg Game}

\author{Xiaohuan Li, ~\IEEEmembership{Member,~IEEE,}
        Shaowen Qin, 
        Xin Tang,
        Jiawen Kang,
        Jin Ye, 
        Zhonghua Zhao,\\
        Yusi Zheng,
        and Dusit Niyato, ~\IEEEmembership{Fellow,~IEEE,}
    \thanks{This work was supported in part by the National Natural Science Foundation of China under Grant U22A2054, U23A20313, and in part by the Guangxi Natural Science Foundation of China under Grant 2025GXNSFAA069687. ({\itshape Corresponding author: Jin Ye, Zhonghua Zhao}.)}
    \thanks{Xiaohuan Li is with the Guangxi University Key Laboratory of Intelligent Networking and Scenario System (School of Information and Communication, Guilin University of Electronic Technology), Guilin 541004, China, and also with National Engineering Laboratory for Comprehensive Transportation Big Data Application Technology (Guangxi), Nanning 530001, China (e-mails: lxhguet@guet.edu.cn).}
    \thanks{Shaowen Qin, Xin Tang, Zhonghua Zhao and Yusi Zheng are with the Guangxi University Key Laboratory of Intelligent Networking and Scenario System (School of Information and Communication, Guilin University of Electronic Technology), Guilin 541004, China (e-mails: qinsw@mails.guet.edu.cn; tangx@mails.guet.edu.cn; yusizheng97@gmail.com; gietzzh@guet.edu.cn).}
    \thanks{Jiawen Kang is with the School of Automation, Guangdong University of Technology, Guangzhou 510006, China (e-mail: kavinkang@gdut.edu.cn).}
    \thanks{Jin Ye is with Guangxi Key Laboratory of Multimedia Communications and Network Technology, Nanning 530000, China, and also with School of Computer and Electronic Information, Guangxi University, Nanning 530000, China (e-mail: yejin@gxu.edu.cn). }
    \thanks{Dusit Niyato is with the College of Computing and Data Science, Nanyang Technological University, Singapore 639798 (e-mail: dniyato@ntu.edu.sg).}
}

\markboth{IEEE Transactions on Network and Service Management} 
{Shell \MakeLowercase{\textit{et al.}}: A Sample Article Using IEEEtran.cls for IEEE Journals} 

\maketitle

\begin{abstract}
The Industrial Internet of Things (IIoT) leverages Federated Learning (FL) for distributed model training while preserving data privacy, and meta-computing enhances FL by optimizing and integrating distributed computing resources, improving efficiency and scalability. Efficient IIoT operations require a trade-off between model quality and training latency. Consequently, a primary challenge of FL in IIoT is to optimize overall system performance by balancing model quality and training latency. This paper designs a satisfaction function that accounts for data size, Age of Information (AoI), and training latency for meta-computing. Additionally, the satisfaction function is incorporated into the utility function to incentivize IIoT nodes to participate in model training. We model the utility functions of servers and nodes as a two-stage Stackelberg game and employ a deep reinforcement learning approach to learn the Stackelberg equilibrium. This approach ensures balanced rewards and enhances the applicability of the incentive scheme for IIoT. Simulation results demonstrate that, under the same budget constraints, the proposed incentive scheme improves utility by at least 23.7\% compared to existing FL schemes without compromising model accuracy.
\end{abstract}
	
\begin{IEEEkeywords}
Industrial Internet of Things, Federated Learning, Age of Information, Incentive Scheme, Stackelberg Game.
\end{IEEEkeywords}

\IEEEpeerreviewmaketitle

\section{Introduction}
The Industrial Internet of Things (IIoT) serves as a core pillar of intelligent manufacturing and digital industrial systems, enabling real-time data collection, analysis, and intelligent decision making by connecting a large number of industrial devices, sensors, and intelligent systems \cite{c1,c2,c3}. Meta-computing \cite{b47} is an emerging computing paradigm that integrates and optimizes distributed computing resources in IIoT through key modules such as zero-trust computing management, task management, device management, identity and access management, and resource scheduling. These modules enable secure aggregation, dynamic task allocation, and efficient resource orchestration, thereby providing flexible and scalable computing services.
Federated Learning (FL) \cite{b8,b9} is a collaborative AI approach that enables IIoT to optimize global machine learning models without migrating local data from IIoT nodes, thus enhancing the intelligent analytics capabilities of meta-computing while ensuring data privacy \cite{b10,b50}.

To ensure efficient operation of IIoT, FL in meta-computing must meet key requirements such as efficient resource allocation, high-quality models, and low latency \cite{c6}. However, due to the large scale of IIoT devices and the complexity of data structures, FL in meta-computing still faces several challenges.

First, IIoT systems consist of numerous heterogeneous devices with varying computational capabilities and communication bandwidths, while task demands fluctuate dynamically with changes in the production environment. Some lowpower devices struggle with high-load computational tasks, whereas high-performance computing nodes may experience bottlenecks due to unbalanced resource allocation, reducing overall system efficiency \cite{c25}. Therefore, a key challenge is how to efficiently schedule tasks within limited resources to optimize resource allocation. Second, machine learning models are widely used in IIoT for intelligent prediction, anomaly detection, and automated control. However, high-quality models typically require significant computational resources and longer training times, while IIoT scenarios with large data volumes and complex structures often demand low latency and real-time responsiveness. Moreover, since IIoT nodes operate in distributed and autonomous environments, they are generally unwilling to share private information, further complicating the balance between model quality and latency \cite{c6}. Consequently, reducing computational and communication latency while maintaining model quality remains a critical research challenge.

The problem addressed in this paper is how to design an efficient incentive scheme that enables distributed IIoT devices with heterogeneous resources to collaboratively train high-quality models while minimizing communication and computation latency under limited resource and privacy constraints. We propose a novel evaluation metric, Satisfaction, to quantify the contribution of IIoT nodes in the FL process. This metric considers data volume, Age of Information (AoI) \cite{b40}, and computation latency to assess node value. Specifically, AoI reflects the freshness of model updates, prioritizing newer information, while computation latency evaluates node responsiveness, preventing high-latency devices from degrading overall training efficiency. Building on this metric, we develop a satisfaction-aware incentive mechanism that jointly formulates the utility functions of the server and participating nodes, dynamically allocates budgets, and incentivizes efficient resource sharing. This mechanism encourages nodes with high data quality and computational efficiency to actively participate in model training. Furthermore, the incentive mechanism is modeled as a Stackelberg Game, where IIoT nodes act as followers responding to the server’s leader decisions. Unlike traditional game-theoretic approaches, we employ Deep Reinforcement Learning (DRL) to learn optimal strategies through interaction experience, thereby eliminating the need for participating agents' private information. The main contributions of this paper are summarized as follows:

\begin{itemize}
\item We propose Meta-Computing Enhanced Federated Learning (MEFL) for dynamic computation scheduling in IIoT. MEFL leverages meta-computing resource scheduling to optimize FL computational resource allocation, employs task management to dynamically orchestrate FL training tasks, utilizes device management to coordinate distributed computing nodes, integrates zero-trust computing management to improve data privacy protection, and incorporates identity and access management for authentication and access control, ensuring secure resource access. MEFL empowers IIoT with efficient, secure, and adaptive computing capabilities, enabling intelligent decision-making and optimized resource utilization in distributed environments.

\item We introduce a new metric named “Satisfaction” to balance training latency and model quality. By considering data size, AoI, and computation latency, we measure the contribution of IIoT nodes and develop a Satisfaction-Aware incentive scheme, constructing utility functions for both nodes and the server to encourage resource sharing. 

\item We transform the optimization problem involving two utility functions into a Stackelberg game model and prove the uniqueness of the Stackelberg Equilibrium (SE). A DRL-based algorithm is employed to learn the SE without requiring private information from other agents, enabling optimal strategy learning purely from historical experience. 
\end{itemize}

The rest of the paper is organized as follows. Section~\ref{secII} reviews some important literature related to this paper. Section~\ref{secIII} gives the system model of the paper. Section ~\ref{secIV} details the satisfaction design. Section~\ref{secV} introduces the incentive scheme based on the satisfaction. Section~\ref{secVI} introduces the Stackelberg game based on DRL. Section~\ref{secVII} presents the experimental parameters and related results. The last section concludes the paper.

\section{Related Works}
\label{secII}
\subsection{FL in IIoT}
FL has been widely applied in IIoT to enable decentralized intelligence while preserving data privacy. However, the heterogeneous nature of IIoT devices poses significant challenges for efficient resource scheduling. The authors in \cite{c19} proposed a semi-decentralized federated edge learning (SD-FEEL) framework that optimizes device association, resource allocation, and edge server placement to minimize training loss and improve accuracy within a limited cost budget. The authors in \cite{c20} proposed a client scheduling scheme for a semi-decentralized FL (SD-FL) framework in IIoT, optimizing client-server association to improve communication reliability and training efficiency while minimizing global training loss. The authors in \cite{c21} proposed a FL-based Cyber Threat Intelligence framework (FL-CTIF) that improves IIoT security by using information fusion to improve cyberattack detection accuracy while reducing training rounds and CPU consumption. The authors in \cite{c22} proposed a decentralized federated learning framework for edge-enabled smart manufacturing, optimizing industrial data processing with inexact ADMM algorithms to improve low-latency and secure decision-making while improving response time and maintaining accuracy. The authors in \cite{c7} proposed a DT-empowered IIoT architecture optimized with FL, integrating deep reinforcement learning for device selection and an asynchronous FL scheme to address device heterogeneity. The authors in \cite{c8} proposed a framework FDEI integrating federated learning with DT-enabled IIoT to enhance service quality and trustworthiness. The authors in \cite{c9} proposed an energy-efficient federated learning framework for digital twin-enabled IIoT, where IIoT devices dynamically select training methods based on environmental conditions. The authors in \cite{c24} introduced a meta-computing driven vertical FL algorithm to address data incompleteness in heterogeneous IIoT devices, enabling collaborative intelligence without requiring shared feature spaces.

However, existing studies still struggle to efficiently schedule tasks in IIoT scenarios where heterogeneous device resources are limited and dynamically varying. The proposed MEFL framework leverages meta-computing for resource scheduling to optimize resource allocation and incorporates a satisfaction-aware incentive scheme, thereby enabling intelligent decision-making and improved resource utilization in IIoT environments.

\subsection{Incentive Schemes in IIoT} 
FL relies on distributed nodes to jointly participate in model training, and the performance of nodes directly affects the training latency and model quality \cite{c10}. Without an effective incentive scheme, selfish nodes might not provide sufficient resources or may be reluctant to join the federated learning task \cite{b20}. Therefore, designing an incentive scheme that fairly evaluates node contributions and encourages high-quality participation is crucial. The authors in \cite{c11} proposed the Model Value Transfer Incentive (MVTI) to enhance federated learning incentives for AIoT, ensuring secure execution and fair benefit redistribution with smart contracts and IPFS. The authors in \cite{c12} proposed BIMFL, a bilevel optimization-based incentive mechanism for federated learning, where the parameter server assigns rewards, and devices decide participation to maximize profit based on rewards and energy costs. The authors in \cite{c13} proposed a communication-efficient incentive mechanism for federated learning using Nash bargaining theory, formulating a concurrent bargaining game with a probabilistic greedy client selection algorithm. The authors in \cite{c18} proposed a multi-exit-based federated edge learning (ME-FEEL) framework for resource-constrained IIoT, enabling devices with limited computational power to exit early during training, reducing latency and maximizing participation in model aggregation. The authors in \cite{c23} proposed a social credit-based incentive mechanism to enhance cooperative D2D transmission across IIoT networks, improving throughput and reducing power consumption.

Although the above studies improve the fairness, participation, and communication efficiency of FL, they still struggle to simultaneously meet latency constraints and ensure model quality in dynamic IIoT scenarios. The proposed satisfaction-aware incentive scheme jointly considers data size, AoI, and service latency to quantify node contributions and balance model quality against service latency, thereby achieving an efficiency–accuracy trade-off in IIoT.

\subsection{Game Theoretic and DRL Approaches in IIoT}
Federated learning in IIoT involves complex resource allocation challenges. Traditional heuristic-based scheduling methods often fail to adapt to dynamic environments. Game-theoretic models have been explored to address resource competition in hierarchical decision-making scenarios, while DRL has emerged as a promising technique for learning optimal resource allocation policies in complex, dynamic IIoT networks. The authors in \cite{c14} proposed a blockchain-based resource trading framework for multi-UAV-assisted IIoT networks, leveraging Stackelberg game modeling and multi-agent DRL to optimize dynamic resource allocation while ensuring security and privacy. The authors in \cite{c15} proposed a DRL-based cloud-edge task scheduling model LsiA3CS, leveraging a Markov game and an asynchronous advantage actor-critic (A3C) algorithm to balance computing resources and reduce communication latency in large-scale IIoT networks. The authors in \cite{c16} focused on Stackelberg game-driven dynamic pricing and resource allocation, utilizing deep Q-networks (DQN) to approximate optimal strategies for resource pricing and task scheduling in IIoT.

However, existing approaches primarily focus on resource allocation and task offloading but do not explicitly incorporate Stackelberg game-theoretic modeling into FL for hierarchical optimization. Most solutions assume complete system information, which is impractical in privacy-sensitive FL scenarios where devices operate autonomously. Additionally, many DRL-based frameworks lack an incentive-aware mechanism to guide learning policies effectively. 

\section{System Model}
\label{secIII}
In order to promote efficient task processing and decision making in IIoT, we design MEFL, a meta-computing enhanced FL framework, as shown in Fig. \ref{fig1}. The device management module contains multiple edge nodes, whose primary purpose is to collect data from production devices, integrate the computational, storage, and communication resources of the edge nodes, and map these resources to the server. These resources are then converted into objects that can be easily accessed by the resource scheduler, enabling local FL training based on the incentive scheme developed by the task manager. The resource scheduler module contains several digital twins of edge nodes, which continuously monitor changes in the configuration details of physical nodes, simulate their possible states, and dynamically optimize resource allocation. The task management module, located on the server, accepts user requests, decomposes tasks, and designs incentive schemes based on conditional constraints. The zero-trust computing management module performs global aggregation for FL via blockchain, while the identity and access management module ensures users have appropriate permissions to access data. Since the zero-trust computing management and identity and access management modules are not the focus of this paper, detailed discussions are omitted. Comprehensive analyses of these modules are available in \cite{b47}. 

\begin{figure*}[!t]
	\centerline{\includegraphics[width=0.95\textwidth]{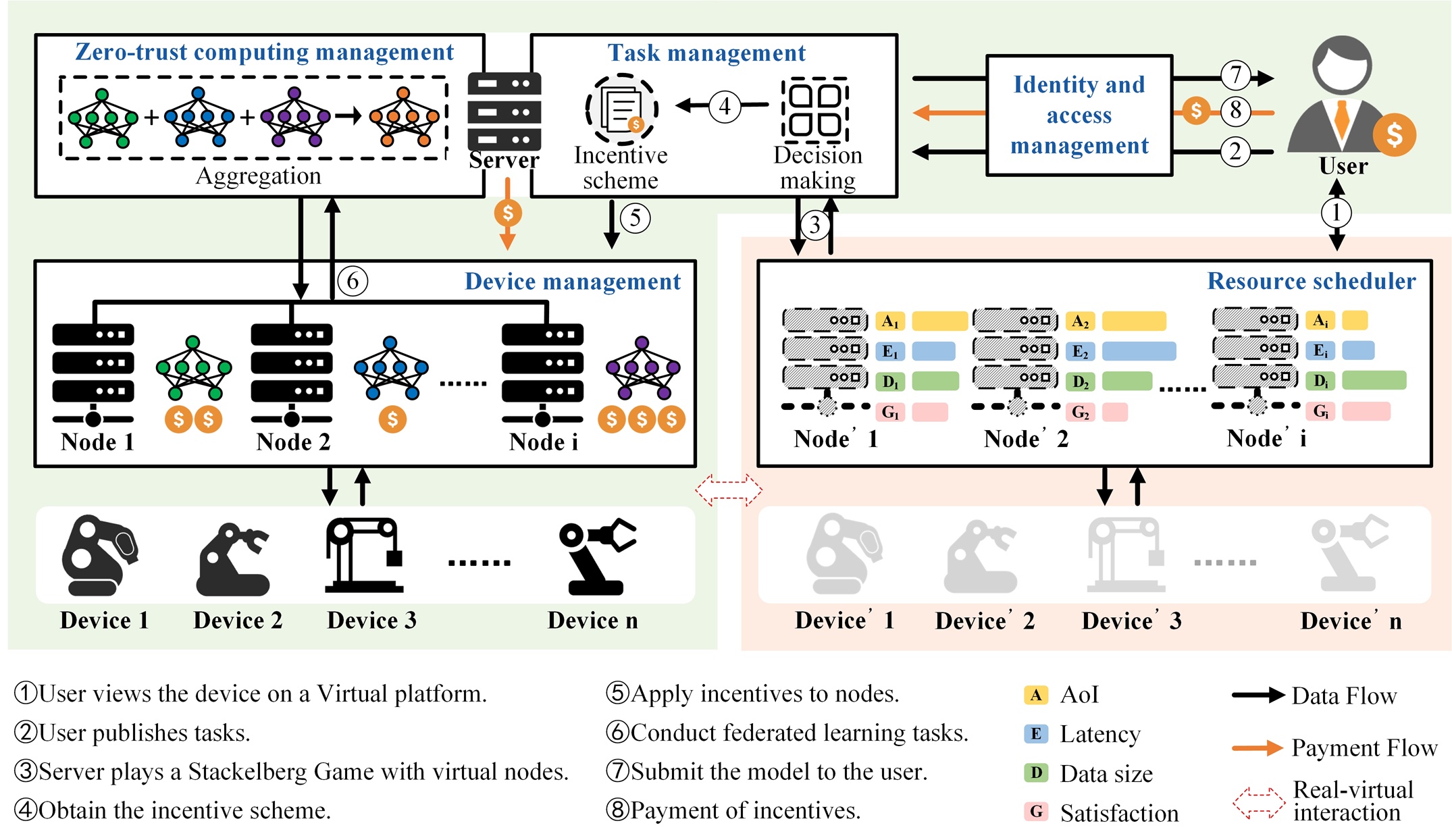}}
	\caption{The MEFL framework for IIoT. The framework enables resource scheduling and task management for IIoT devices, where the server employs a Satisfaction-aware incentive mechanism to coordinate nodes for efficient task execution.}
	\label{fig1}
\end{figure*}

In MEFL, users view devices on a virtual platform and publish tasks, which are transmitted to the task management module on server. The server interacts with virtual nodes in the resource scheduling module via a Stackelberg game and generates a satisfaction-aware incentive scheme based on the game results. The incentives are distributed to physical nodes in the device management module to drive local FL tasks. After completing model training, the physical nodes upload their updates to the zero-trust computing management module for aggregation. After multiple iterations, the final global model is produced and returned to the user, while the user completes the corresponding incentive payment.
The main notations used in this paper are shown in Table \ref{tab1}.

We represent the set of IIoT nodes involved as ${I_{all}} = \{1,...,i,...,I\}$ and the task duration as $T$. Upon task arrival, each model training is iterated $K$ times to minimize global losses, where $K$ is specified by the server. Assume that there are $I$ nodes with local data sets $\{D'_1,D'_2,...,D'_I\}$. We define $D_i\triangleq|D'_i|$, where $|\cdot|$ denotes the size of the data set with meta-computing, each node downloads a shared global model $\omega$ from the server and trains the model using its local data. The node then uploads the new local model update to the server. Therefore, the total size of data samples from $I$ nodes is $\sum\nolimits_{i=1}^{I}{{D}_{i}={D}_{all}}$. The loss function of the node $i$ with the data set $D'_i$ is
\begin{equation}
\begin{split}
{{F}_{i}}(\omega )\triangleq \frac{1}{{{D}_{i}}}\sum\limits_{s\in {{D}_{i}}'}{{{f}_{i}}(\omega )},\label{eq30} 
\end{split}
\end{equation}
where $f_i(\omega)$ is the loss function on the data sample $s$. The goal is to optimize the global loss function $F(\omega)$ by minimizing the weighted average of every node $i$’s local loss function $F_i(\omega)$ on its local training samples
\begin{equation}
\begin{split}
F(\omega )\triangleq \frac{\sum\nolimits_{i=1}^{I}{{{D}_{i}}{{F}_{i}}(\omega )}}{D_{all}},\label{eq31} 
\end{split}
\end{equation}
\begin{equation}
\begin{split}
\omega *=\arg \min F(\omega ).\label{eq32} 
\end{split}
\end{equation}

Meta-computing assists the training process and the exchange of models between nodes and the server. Specifically, it accelerates local update procedures and improves the efficiency of parameter transmission. As a result, each training round can complete sooner, and nodes can submit updates more quickly. This architectural support enhances the timeliness of federated learning and reduces resource overhead.

Due to the inherent complexity of many machine learning models, it is hard to find a closed-form solution. Therefore, it is often solved using gradient descent techniques \cite{b37}. Given that IIoT nodes may be reluctant to provide fresh sensing data for time-sensitive FL tasks, reliable incentives are greatly needed to encourage users to share fresh sensing data, which is discussed in the next section.

\begin{table}[t]
\caption{Summary of Main Notations}
\begin{center}
	\begin{tabular}{|p{1.2cm}|p{5.5cm}|}
		\hline
        \textbf{Notation}     &\textbf{Description}  \\ \hline
        $a_i$                 & The duration from the end of data collection to the beginning of the next period of data collection for node $i$         \\ \hline
		$c_i$                      &  The time spent by the node $i$ to collect and process the model training data         \\ \hline
		$d_i$               & The size of data collected per unit time period for node $i$        \\ \hline
        $r_i$                      & The satisfaction unit reward provided by the server for node $i$          \\ \hline
		$\sigma_i$                    & The unit cost required to maintain the update cycle for node $i$          \\ \hline
		$\theta _i$                    & The period of update buffer data for node $i$  \\ \hline
        $D_{i}$                      & The data size collected by node $i$          \\ \hline
        $E_i$                 &The service latency for node $i$            \\ \hline
		$G_i$                        & Server satisfaction with node $i$        \\ \hline
        $I_{all}$                      & The set of IIoT nodes involved          \\ \hline
        $M_i$                &The quality of the model provided by node $i$       \\ \hline
        $R_i$                 &The reward given to node $i$ by the server           \\ \hline
        $T$                 &The task duration            \\ \hline
        $U_i$                 &The utility of node $i$            \\ \hline
        $V_i$                 &The utility obtained by the server from node $i$           \\ \hline
	\end{tabular}
\label{tab1}
\end{center}
\end{table}

\section{Satisfaction: Quality Control for IIoT}
\label{secIV}
The structure of IIoT is complex, containing multiple types of industrial devices and involving a large number of industrial nodes, where the trade-off between low latency and high model quality must be carefully considered in meta-computing enhanced federated learning. Latency plays a critical role in industrial control and decision-making processes. High latency can lead to delayed data synchronization, causing inefficiencies in predictive maintenance, anomaly detection, and real-time monitoring \cite{c17}. Meanwhile, model quality determines the accuracy and reliability of decision-making. Degraded model quality means that the data used for decision-making is less accurate or outdated, which may lead to incorrect control actions, reduced production efficiency, or potential safety risks. Furthermore, model quality directly influences the effectiveness of IIoT decision-making processes—low-quality models may lead to suboptimal system adjustments or improper responses to changing industrial conditions. Traditional quality metrics in FL do not adequately capture the requirements of IIoT environments. Therefore, we propose a satisfaction metric $G_i$, to balance quality and latency. The satisfaction metric $G_i$ of the task $i$ is denoted as
\begin{equation}
\begin{split}
{G_{i}} = \tau {M_{i}} - \lambda{E_{i}},\label{eq10} 
\end{split}
\end{equation}
where $\tau>0$ and $\lambda>0$ are weighting parameters. $\tau$ weights the contribution of model quality to satisfaction, while $\lambda$ weights the penalty imposed by service latency.

Since the timeout of gradient updates can negatively impact the learning results, we use AoI to measure the freshness of the model in order to ensure its quality. AoI, as a valid measure of information freshness, denotes the latency of the information from its generation to the completion of the model training after it has been uploaded to the server \cite{b35}, and it can enhance the performance of time-critical applications and services. In FL, we assume that requests arrive at the beginning of each cycle. We focus on FL with a data cache buffer on the node and AoI \cite{b38}. The node $i$ periodically updates its cached data with an update period $\theta _i$ independently. It is denoted as
\begin{equation}
	{\theta _i} = {c_i}t + {a_i}t, \label{eq1}
\end{equation}
where ${c_i}t\ ({c_i} \in N)$ is the data collection and preprocessing time at node $i$ with meta-computing assistance, and ${a_i}t\ ({a_i} \in N)$ is the duration from the end of data collection to the beginning of the next phase of data collection, which includes the meta-computing assisted service period and any idle time.

When the request arrives during the data collection phase or at the beginning of phase $(c_i + 1)t$, the AoI is $t$. This is the minimum  AoI value. For requests arriving in phase $l \times t$, where $l \ge (c_i + 2)$, the AoI will be $[l - (c_i + 1) + 1]t$. We suppose that $t$ is fixed and that the update cycle ${\theta _i}$ is affected by ${c _i}$ and ${a _i}$. Let us consider a case with an adjustable update phase, i.e., when ${a_i} = a$ is fixed, ${c_i} = \frac{{{\theta _i}}}{t} - a$, we use ${\theta _i}$ instead of ${c _i}$. Therefore, the average AoI of the node $i$ is
\begin{equation}
\begin{split}
{A_i} &= \frac{t}{{{c_i} + {a_i}}}[{c_i} + 1 + \frac{{({a_i} - 1)({a_i} + 2)}}{2}]\\&
= \frac{{t{\theta _i}}}{{{\theta _i} - at}} + \frac{{{t^2}}}{{{\theta _i} - at}}(\frac{{{a^2} - a}}{2}),\label{eq3} 
\end{split}
\end{equation}
when ${\theta _i} > at$ , ${\bar A_i}({\theta _i})$ is a concave function about ${\theta _i}$.

In addition to AoI, traditional FL experiences service latency \cite{b39}, defined in this paper as ${E_{i}}$. Unlike AoI, service latency refers to the duration from when the node receives a request until it uploads a local model. It includes both the data collection period and the model training period.
The probability of a request arriving is uniformly distributed across periods, i.e., $\frac{1}{T}$. If the request arrives in the $n$th period of the data collection cycle, the service delay is ${c_i}t + t - (n - 1)t$. If the request arrives in any of the remaining periods, the service latency is $t$. Therefore, the average service latency is
\begin{equation}
\begin{split}
{E_i} &= \frac{{{c_i}}}{{{c_i} + {a_i}}}[({c_i}t + t) + \cdots + ({c_i}t + t - ({c_i} - 1)t)] + \frac{{{a_i}}}{{{c_i} + {a_i}}}t\\&
= \frac{{{c_i}}}{{{c_i} + {a_i}}}[\frac{{{c_i}}}{2}({c_i} + 3)] + \frac{{{a_i}}}{{{c_i} + {a_i}}}t\\&
= \frac{{{{({\theta _i} - at)}^3}}}{{2t{\theta _i}}} + \frac{{3{{({\theta _i} - at)}^2}}}{{2{\theta _i}}} + \frac{{a{t^2}}}{\theta }.
\label{eq4} 
\end{split}
\end{equation}

From Eq. \eqref{eq3} and \eqref{eq4}, we observe a trade-off between service delay and AoI when selecting the cycle length ${\theta _i}$. Intuitively, a lower ${\theta _i}$, which means that a shorter cycle length and more frequent data updates, results in a lower average AoI. However, service latency also increases as updates take time.

To ensure user satisfaction, we must maintain the freshness of the model while guaranteeing its quality. This requires that we cannot rely solely on raw data to evaluate node contributions. Therefore, we define the model quality contributed by each node as the ratio of data size to AoI, which is denoted as
\begin{equation}
\begin{split}
{M_i} = \rho \frac{{{D_i}}}{{{A_i}}} = \rho \frac{{Td({\theta _i} - at)}}{{{\theta _i}(t{\theta _i} + {t^2}(\frac{{{a^2} - a}}{2}))}} ,\label{eq9} 
\end{split}
\end{equation}
where $\rho>0$ is a quality scaling factor that transforms the AoI based ratio into a normalized quality measure, ensuring comparability with the latency penalty term.

In the context of combining the IIoT with FL, the final aggregated global model depends on the local models contributed by individual nodes, while the size of the raw data ${D_i}$ reflects the node's contribution to the overall FL training process. ${D_i}$ is denoted as ${D_i} = \frac{T}{{{\theta _i}}}d$, where $d$ is the amount of data collected per unit time period.

\section{Satisfaction-aware incentives scheme}
\label{secV}
\subsection{Utility Model and Problem Formulation}
To meet the quality and latency requirements of IIoT tasks, we formulate the corresponding utility functions and optimization objectives.
Node $i$ is incentivized to handle incoming task requests, and each participating node receives a monetary reward ${R_i}$ from the server. Therefore, the utility of node $i$ is the difference between the reward ${R_i}$ and the cost ${C_i}$ of participating in the FL task. The utility $U_i$ can be expressed as
\begin{equation}
\begin{split}
{U_i} = {R_i} - {C_i},\label{eq5} 
\end{split}
\end{equation}
where the cost of the FL training task is defined as
${C_i} = \frac{{{\sigma _i}}}{{{\theta _i}}}$, where ${\sigma _i}$ is the unit cost required to maintain the update cycle, ${\theta _i}$ with respect to data collection, computation and transmission. 

Additionally, to incentivize nodes to participate in FL and provide higher-quality local models, the server will provide corresponding rewards ${R_i}$, which is denoted as ${R_i} = {r_i}\ln (\frac{1}{{{\theta _i}}})$, where ${r_i}$ is a unit award. The utility that the server receives from node $i$ is defined as the difference between the satisfaction gain $\beta {G_i}$ obtained by the server and the payoff ${R_i}$ given to the node, expressed as
\begin{equation}
\begin{split}
{V} =\sum\limits_{i\in {{I}_{all}}} (\beta {G_i} - {R_i}),\label{eq12} 
\end{split}
\end{equation}
where $\beta $ is the profit per unit of satisfaction.

In order to meet the quality and latency requirements of IIoT tasks, we define the goal of the incentive mechanism as maximizing server utility function $V$ and node utility function $U_i$.
The optimization problem for the node utility  is formulated as follows:
\begin{equation}
\begin{split}
  & \text{P1: }\underset{{{\theta }_{i}}}{\mathop{\max }}\, {{U}_{i}} \\ 
 & s.t. {{\theta }_{i}}\ge \theta _{i}^{\min },
\label{eq41} 
\end{split}
\end{equation}
which is subject to a minimum update cycle $\theta_i^{\max}$.
The optimization problem for the server utility is formulated as follows:
\begin{equation}
\begin{split}
  & \text{P2: }\underset{{{r}_{i}}}{\mathop{\max }}\,{{V}_{}} \\ 
 & s.t.{{A}_{i}}\le A_{i}^{\max }, \\ 
 & {{D}_{i}}\le D_{i}^{\max }, \\ 
 & \sum\limits_{i\in I}{{{r}_{i}}}\le {{R}^{\max }},  
\label{eq42} 
\end{split}
\end{equation}
which is subject to a maximum tolerable AoI constraint ${A_i}^{\max}$ , the maximum tolerable service delay constraint ${D_i}^{\max}$ , and a budget constraint ${R^{\max}}$.

\subsection{Stackelberg Game Analysis}
Since the goal of the server and the node is to maximize their respective reward functions Eq. \eqref{eq41} and \eqref{eq42}, We model the interaction between the server and the node as a two-part Stackelberg game, where the server, as a leader, determines the reward strategy ${r_i}$ and the node, as a follower, responds with ${\theta _i}$. The Stackelberg game can be defined in terms of strategies as
\begin{equation}
\begin{split}
\Omega  = \{ (SP \cup {\{ i\} _{i \in I}}),(r_i,\theta _i),(V_i,U_i)\} .\label{eq14} 
\end{split}
\end{equation}

The policy consists of three parts, $(SP \cup {\{ i\} _{i \in I}})$ denotes the set of servers and corresponding nodes, $({r _i},{\theta _i})$ denotes the set of policies, and $({V_i},{U _i})$ denotes the set of utilities.
We denote ${\theta ^*}$ as the optimal update period provided by the node, i.e., ${\theta ^*} = [\theta _1^*,...,\theta _i^*,...,\theta _I^*] $, and ${r_i^*}$ as the optimal reward decision of the server.
\begin{definition}
(Stackelberg Equilibrium): There exists an optimal update cycle $\theta _i^*$ and an optimal reward $r_i^*$. A policy $(\theta _i^*, r_i^*)$ is considered a Stackelberg equilibrium if and only if it satisfies the following 
\begin{equation}
\begin{split}
\forall \theta _i^{},U_i^{}(\theta _i^*,r_{i}^*) \ge U_i^{}(\theta _i^{},r_{i}^*)\\
\forall r_i^{},V(\theta _i^*,r_{i}^*) \ge U_i(\theta _i^*,r_i).\label{eq15}
\end{split}
\end{equation}
\end{definition}

The node's optimal decision is analyzed using the standard inverse induction method. Subsequently, we compute the first-order and second-order derivatives of ${U _i}$ with respect to ${\theta _i}$ as follows:
\begin{equation}
\begin{split}
  & \frac{\partial U_{i}^{{}}}{\partial \theta _{i}^{{}}}=\frac{{{\sigma }_{i}}}{\theta _{i}^{2}}-\frac{{{r}_{i}}}{{{\theta }_{i}}} \\ 
 & \frac{{{\partial }^{2}}U_{i}^{{}}}{\partial \theta _{i}^{2}}=\frac{{{r}_{i}}{{\theta }_{i}}-2{{\sigma }_{i}}}{\theta _{i}^{3}},  \label{eq16}
\end{split}
\end{equation}
where ${{\theta }_{i}}<\frac{2{{\sigma }_{i}}}{{{r}_{i}}}$, $\frac{{{\partial }^{2}}U_{i}^{{}}}{\partial \theta _{i}^{2}}<0$, and $U_{i}^{j}$ are convex functions. Solving the first-order derivative optimality condition$\frac{\partial U_{i}^{{}}}{\partial \theta _{i}^{{}}}=0$ is obtained as $\theta _{i}^{*}=\frac{{{\sigma }_{i}}}{{{r}_{i}}}$. By setting ${{\theta }_{i}}={{\theta }_{i,\max }}$ and ${{\theta }_{i}}={{\theta }_{i,\min }}$, we can get the upper and lower bounds of $\theta$ with respect to each node $i$. Based on this, the optimal update response is obtained as
\begin{equation}
\theta _i^* = \left\{ \begin{array}{l}
{\theta _{i,\max }},{\rm{                        }}{\theta _i} > {\theta _{i,\max }},\\
\frac{{{\sigma _i}}}{{{r_i}}},{\rm{                          }}{\theta _{i,\min }} < {\theta _i} < {\theta _{i,\max }},\\
{\theta _{i,\min }},{\rm{                        }}{\theta _i} < {\theta _{i,\min }}.
\end{array} \right.\label{eq18}
\end{equation}

By substituting $\theta _{i}^{*}$ into Eq.(\ref{eq12}),the server's utility $V$ can be re-expressed as
\begin{equation}
\begin{array}{ll}
V=&\sum\limits_{i \in I} (\beta \tau \rho \frac{{Td({\sigma _i} - atr)}}{{{\sigma _i}(t\frac{{{\sigma _i}}}{r} + {t^2}(\frac{{{a^2} - a}}{2}))}} \\&- 
\beta \lambda \frac{{({{(\frac{{{\sigma _i}}}{r} - at)}^3} + 3t{{(\frac{{{\sigma _i}}}{r} - at)}^2} + 2a{t^3})r}}{{2t{\sigma _i}}} - r\ln (\frac{r}{{{\sigma _i}}})).
\end{array} 
\label{eq19}
\end{equation}

The server utility optimization problem with feasibility constraints is reformulated as
\begin{equation}
\begin{split}
\underset{r}{\mathop{\max }}\,&V\\
 s.t. &\max \{A_{i}^{{}}\}\le {{A}_{i}}^{\max }, \\ 
 & \max \{D_{i}^{{}}\}\le {{D}_{i}}^{\max }.
 \label{eq20}
\end{split}
\end{equation}

The first-order and second-order derivatives of ${{V}_{i}}$ with respect to $r_i$ are obtained as follows:
\begin{equation}
\begin{split}
\begin{array}{l}
\frac{{\partial {V_i}}}{{\partial {r_i}}} =  - \frac{{2T\rho d\tau \beta  \cdot \left( {\left( {{a^3} - {a^2}} \right){t^2}{r^2} + 4a\sigma tr - 2{\sigma ^2}} \right)}}{{\sigma t \cdot {{\left( {\left( {{a^2} - a} \right)tr + 2\sigma } \right)}^2}}}\\
\qquad \; \:{\rm} + \frac{{\lambda \beta  \cdot \left( {\left( {{a^3} - 3{a^2} - 2a} \right){t^3}{r^3} + \left( {3 - 3a} \right){\sigma ^2}tr + 2{\sigma ^3}} \right)}}{{2\sigma t{r^3}}} - \ln \left( {\frac{r}{\sigma }} \right) - 1,\\
\frac{{{\partial ^2}{V_i}}}{{\partial {r_i}^2}} =  - \frac{{8Ta \cdot \left( {a + 1} \right)\rho d\tau \sigma \beta }}{{{{\left( {\left( {{a^2} - a} \right)tr + 2\sigma } \right)}^3}}} - \frac{{3\lambda \sigma \beta \Lambda }}{{t{r^4}}} - \frac{1}{r},
\end{array}
\label{eq21}
\end{split}
\end{equation}
where $a>1$, ${{\theta }_{i}}>at$, $\theta _{i}^{*}=\frac{{{\sigma }_{i}}}{{{r}_{i}}}>at$, and then $r<\frac{{{\sigma }_{i}}}{at}<\frac{{{\sigma }_{i}}}{(a-1)t}$, we obtain $\Lambda =\sigma -\left( a-1 \right)tr>0$. The second order derivative $\frac{{{\partial }^{2}}V_{i}^{{}}}{\partial {{r}_{i}}^{2}}<0$ is finally obtained. It shows that the objective function in the problem is convex with respect to ${{r}_{i}}$ and the problem in Eq. \eqref{eq19} is a convex optimization problem, which can be solved by existing convex optimization tools to find the optimal solution of ${{r}_i^{*}}$ .

\begin{theorem}
There exists a unique stackelberg equilibrium in the proposed game $(\theta _{i}^{*},r_{i}^{*})$.
\end{theorem}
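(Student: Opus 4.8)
The plan is to exploit the sequential structure of the game and prove existence and uniqueness in two layers, starting from the follower and moving up to the leader. First I would fix an arbitrary leader profile $r=(r_1,\dots,r_I)$ and analyze each follower's problem P1 in isolation. From Eq.~\eqref{eq16}, on the region $\theta_i<2\sigma_i/r_i$ we have $\partial^2 U_i/\partial\theta_i^2<0$, so $U_i$ is strictly concave there; since the stationary point $\theta_i^{*}=\sigma_i/r_i$ lies in that region ($\sigma_i/r_i<2\sigma_i/r_i$), it is the unique unconstrained maximizer, and clamping it to the feasible interval $[\theta_{i,\min},\theta_{i,\max}]$ yields the unique best response~\eqref{eq18}. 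Strict concavity on the relevant region makes the best-response map $r_i\mapsto\theta_i^{*}(r_i)$ single-valued and continuous, which is exactly what is needed to collapse the bilevel problem to a single-level one.

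Next I would substitute the closed form $\theta_i^{*}=\sigma_i/r_i$ into the server's objective to obtain $V$ as a function of $r$ alone, i.e. Eq.~\eqref{eq19}, and treat the server's problem~\eqref{eq20} under its feasibility constraints. The key step is strict concavity of $V$ in $r$. Because $V$ is separable across nodes, it suffices to show each per-node term $V_i(r_i)$ is strictly concave in its own $r_i$, which is precisely the second-order computation in Eq.~\eqref{eq21}: under the standing assumptions $a>1$ and $\theta_i^{*}=\sigma_i/r_i>at$ (equivalently $r_i<\sigma_i/(at)$) one checks $\Lambda=\sigma_i-(a-1)tr_i>0$, so every term of $\partial^2 V_i/\partial r_i^{2}$ is negative. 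A strictly concave function maximized over a convex feasible set attains its maximum at a unique point $r_i^{*}$; I would also note that the constraints $A_i\le A_i^{\max}$, $D_i\le D_i^{\max}$ and the budget $\sum_i r_i\le R^{\max}$ carve out a convex (box-plus-halfspace) region, so the maximizer $r^{*}=(r_1^{*},\dots,r_I^{*})$ is unique, while existence follows from compactness of the feasible set and continuity of $V$.

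Finally I would assemble the pieces: given the unique leader optimum $r^{*}$, the follower responds with the unique $\theta^{*}(r^{*})$ from~\eqref{eq18}, and by construction the pair $(\theta^{*},r^{*})$ satisfies both inequalities in the definition of Stackelberg equilibrium given above; conversely, any equilibrium must have each follower playing its unique best response and the leader playing the unique maximizer of the induced $V$, so no other equilibrium exists. The main obstacle I anticipate is the strict-concavity claim for $V_i$: the first derivative mixes a rational term from the quality/AoI ratio, a cubic-over-$r^{3}$ term from the service-latency expression, and the logarithmic reward term, and one must keep the domain restriction $at<\theta_i^{*}=\sigma_i/r_i$ in force throughout so that the sign of $\Lambda$ — and hence of the whole second derivative — is controlled. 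Care is also needed on the clamped branches of~\eqref{eq18}: I would restrict the derivative analysis~\eqref{eq21} to the interior branch $\theta_i^{*}=\sigma_i/r_i$ where those formulas are valid, and invoke continuity and monotonicity to rule out competing optima of $V$ on the pieces where $\theta_i$ is pinned to $\theta_{i,\min}$ or $\theta_{i,\max}$.
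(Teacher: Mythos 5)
Your proposal follows essentially the same backward-induction route as the paper's own proof: uniqueness of each follower's best response from $\partial^2 U_i/\partial\theta_i^2<0$, substitution of $\theta_i^{*}=\sigma_i/r_i$ into $V$, and uniqueness of the leader's optimum from $\partial^2 V_i/\partial r_i^2<0$ under the domain restriction that keeps $\Lambda>0$. Your version is in fact more careful than the paper's two-line argument (which omits the clamped branches of Eq.~\eqref{eq18}, the convexity of the constraint set, and the existence step via compactness), but it is the same proof in substance.
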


\begin{proof}
Based on the given reward policy $r$, each node has an optimal update policy $\theta _{i}^{*}$, which is unique due to the convex characterization of the payoff function ($\frac{{{\partial }^{2}}U_{i}^{{}}}{\partial \theta _{i}^{2}}<0$). Next, the server has a unique optimal policy under the best response of all nodes ($\frac{{{\partial }^{2}}V_{i}^{{}}}{\partial {{r}_{i}}^{2}}<0$). The unique equilibrium is reached since the final policy $(\theta _{i}^{*},r_{{}}^{*})$ maximizes the node's utility and the server's utility, respectively. $\qedsymbol$
\end{proof}

As each IIoT node is modeled as a rational agent seeking to maximize its own utility, it is unlikely to intentionally harm the system, since such behavior would ultimately reduce its expected reward under the Stackelberg equilibrium. Even when nodes attempt limited strategic misreporting, the server can still infer their true state through observable indicators such as AoI, service latency, and participation frequency. Consequently, the proposed scheme demonstrates robustness against unreliable nodes and mild strategic behaviors. 

\section{DRL-based Stackelberg Game}
\label{secVI}
\subsection{DRL for Stackelberg Game}
Traditional heuristic algorithms require full information about the game environment. However, due to the non-cooperative nature of the game, players are typically unwilling to disclose their private information. DRL aims to learn decision-making based on past experiences, current states, and given rewards. To address decision-making problems with continuous action spaces, this paper employs the multi-agent deep deterministic policy gradient (MADDPG) algorithm \cite{b55}, a DRL algorithm designed for multi-agent environments. We describe the detailed definition of each term as follows.

\textbf{State Space:} In the current decision round $t$, the state space is defined by the price strategy $R^t = \{r^t_1, \dots, r^t_i, \dots, r^t_I\}$ assigned by the server to each edge node, and the caching strategy $\Theta^t = \{\theta^t_1, \dots, \theta^t_i, \dots, \theta^t_I\}$ of the edge node. Formally, the state space at round $t$ is represented as $S^t \overset{\Delta}{=} \{R^t, \Theta^t\}$.

\textbf{Partially Observable Space:} For privacy protection reasons, the agents at the edge nodes cannot observe the complete state of the environment and can only make decisions based on their localized observations in the formulated partially observable space. At the beginning of each training round $t$, the server first decides on its strategy based on its own historical strategy, which can be regarded as the observation space of the server: $o^t_{server} \overset{\Delta}{=} \{R^{t-L}, \Theta^{t-L}, \dots, R^{t-1}, \Theta^{t-1}\}$. Then, edge node $i$ determines its caching strategy based on the historical pricing strategy of the server and the historical update strategies $\Theta^t_{-i} = \{\theta^t_1, \dots, \theta^t_{i-1}, \theta^t_{i+1}, \dots, \theta^t_I\}$ of the other edge nodes. Therefore, the observation space of edge node $i$ is denoted as $o^t_{node} \overset{\Delta}{=} \{R^{t-L}_i, \Theta^{t-L}_{-i}, \dots, R^{t-1}_i, \Theta^{t-1}_{-i}\}$.

\textbf{Action Space:} After receiving the observation $o^t_{server}$, the server agent must take an action $x^t_{server} = r^t$ with the goal of utility maximization. Considering the bid limit $r^{max}$, the action space is defined as $r\in[0, r^{max}]$. The edge node determines a caching strategy $x^t_{node} = \theta^t$ after receiving the observation $o^t_{node}$.

\textbf{Reward Function:} After all agents take actions, each agent receives an immediate reward $e_t$ corresponding to the current state and the action taken. The reward functions of the nodes and server are aligned with the utility functions in Eq. \eqref{eq5} and \eqref{eq12}.

In each training cycle, the server determines a payment strategy. Upon observing the payment strategy from the server, the edge nodes determine their feedback. After the server receives the optimal training strategies from the edge nodes, it proceeds to determine the payment strategy. The server also updates the strategy and value function based on the rewards from each training cycle.

\subsection{MADDPG Algorithm}
\begin{figure}[t!]
	\centerline{\includegraphics[width=0.49\textwidth]{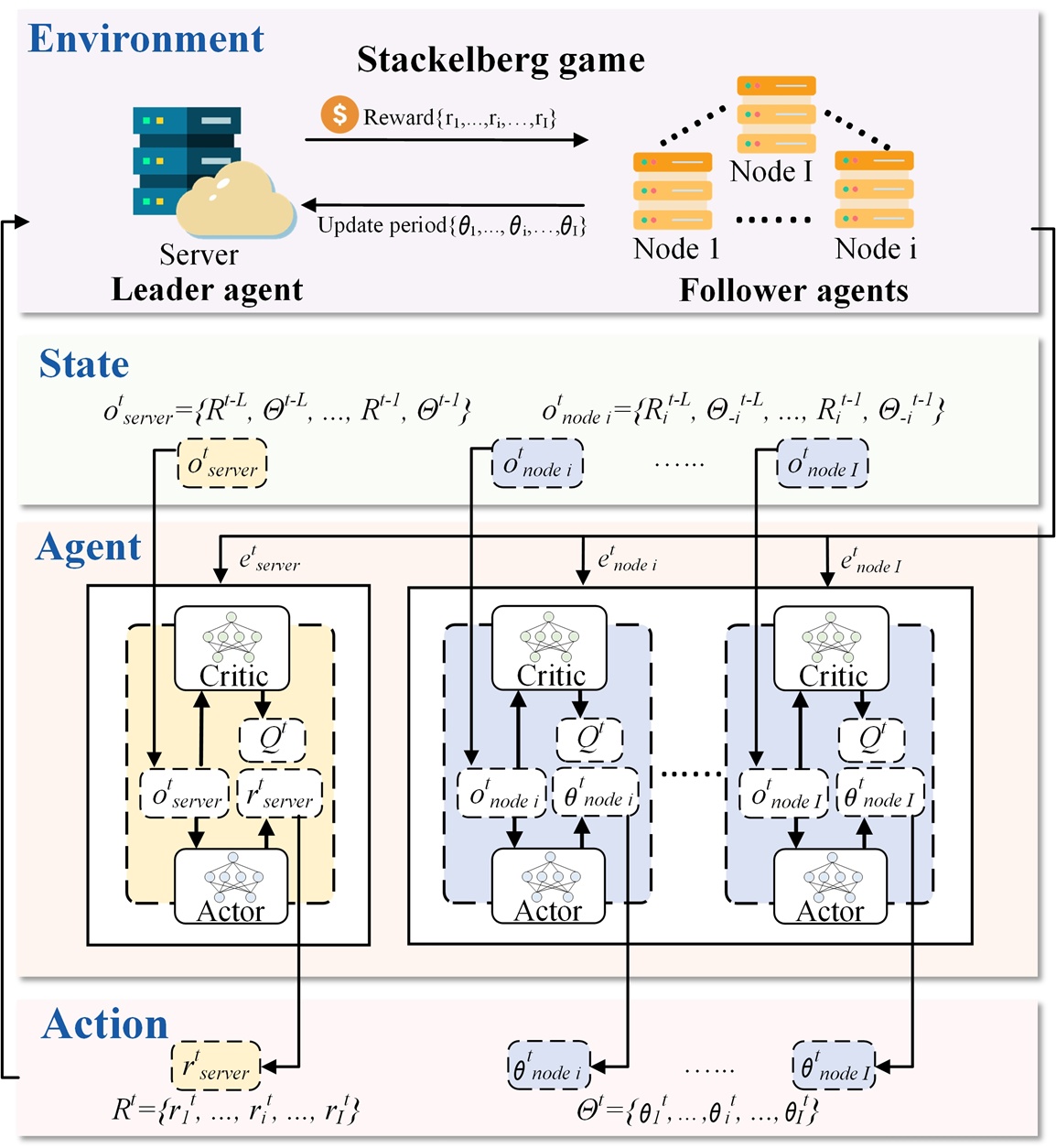}}
        \caption{DRL algorithm for Stackelberg game.}
	\label{fig11a}
\end{figure}

\begin{figure}[b]
	\centerline{\includegraphics[width=0.5\textwidth]{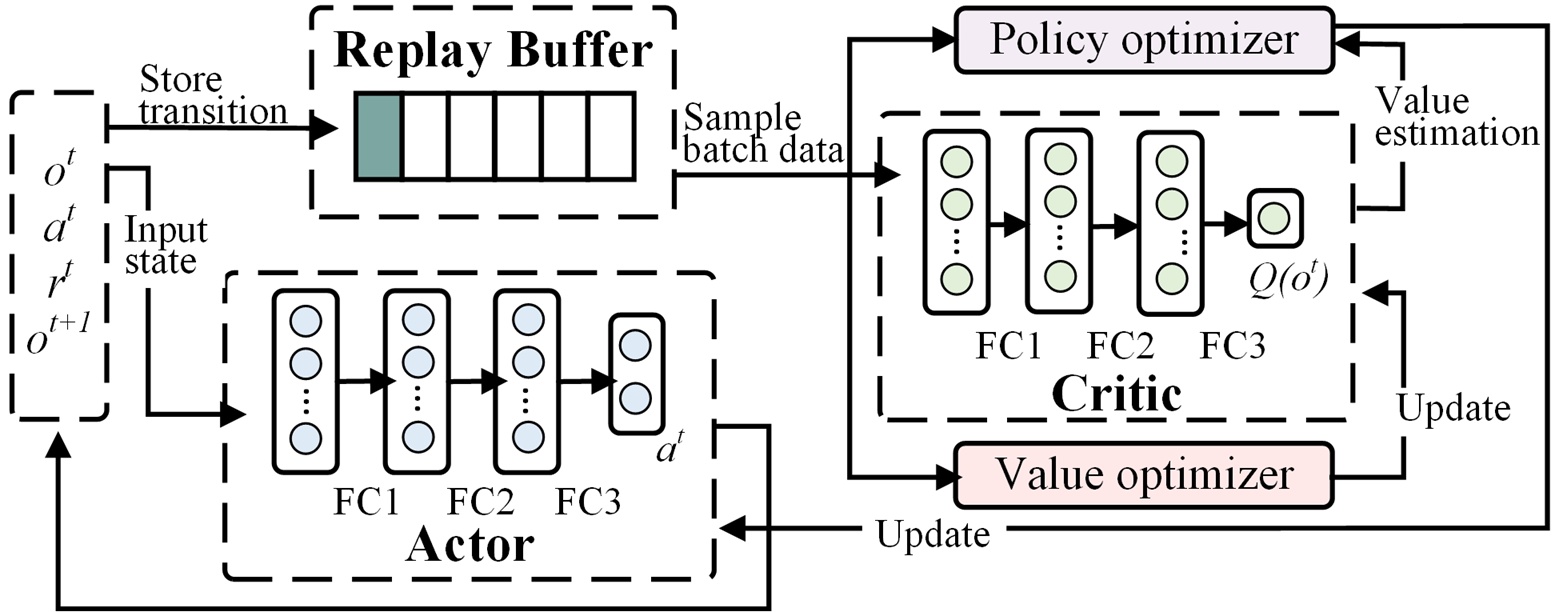}}
        \caption{Details of the DRL Controller.}
	\label{fig11b}
\end{figure}

The process of the MADDPG-based Stackelberg game is illustrated in Fig. \ref{fig11a}, where the server serves as the leader and nodes act as the followers. In each training cycle, the server agent observes the state $o^t_{server}$ and determines the action $x^t_{server}$, while the node agent observes the state $o^t_{node}$ and determines the action $\theta^t_{node}$. Subsequently, the current state transitions to the next, and the agents receives the reward. The detailed components of the DRL controller for each agent are shown in Fig. \ref{fig11b}. The replay buffer is used to store transitions, including the current state, action, reward, and next state, collected during interactions with the environment. These stored transitions are sampled in batches to decorrelate sequential data and stabilize the training process. The actor and critic network comprising three fully connected layers. The actor network takes the current state as input and outputs the corresponding action by generating a policy. The critic network evaluates the action taken by the actor network and provides a value estimation to guide policy improvement. Both the actor and critic networks are updated using two separate optimization modules. The policy optimizer updates the parameters of the actor network based on the policy gradient, while the value optimizer minimizes the temporal difference error to refine the critic network's value estimations.
\begin{algorithm}[!b]
\caption{MADDPG-based Stackelberg Game}\label{alg1}
\begin{algorithmic}[1]
    \STATE Initialize maximum episodes $E$, maximum time steps $T$ in an episode, batch size $B$
    \FOR{Each agent}
        \STATE Initialize critic network $Q(s_t,x_t|\phi^Q$) and actor $\mu(s_t|\phi^\mu)$ with weights $\phi^Q$ and $\phi^\mu$
    \ENDFOR    
    \WHILE{$e\le E$}
        \STATE Initialize a random process $N$ for action exploration
        \STATE Receive initial observation state $o_1$
        \WHILE{$t\le T$}
            \STATE Select action $x_t=\ \mu(o_t|\phi^\mu)+N_t$ according to the current policy
            \STATE Execute action $x_t$, and observe reward $e_t$ and observe new state $o_{t+1}$
            \STATE Store transition $(o_t,x_t,e_t,o_{t+1})$ in $B$
            \STATE Sample a random minibatch of $N$ transitions $(o_i,x_i,e_i,o_{i+1})$ from $B$
            \STATE Set $y_i$ by Eq. \eqref{eq106}
            \STATE Update critic by minimizing the loss formula Eq. \eqref{eq105}
            \STATE Update the actor policy using the sampled policy gradient formula Eq. \eqref{eq104}
            \STATE Update the target networks by Eq. \eqref{eq107}
            \STATE $t=t+1$
        \ENDWHILE
        \STATE $e=e+1$
    \ENDWHILE       
\end{algorithmic}
\end{algorithm}

The actor network is responsible for outputting a deterministic action $x_t$  based on the current environment state $s_t$. The actor network $\mu(o_t|\phi^\mu)$ is defined by the parameters $\phi^\mu$. The parameters $\phi^\mu$ of the actor network are updated using the policy gradient formula, given as
\begin{equation}
\begin{split}
\nabla_{\phi^\mu}J\approx \mathbb{E}_{o_t\sim\rho^\beta}\left[\nabla_{\phi^\mu}Q\left(o_t,\mu\left(o_t\middle|\phi^\mu\right)\middle|\phi^Q\right)\right],\label{eq104}
\end{split}
\end{equation}
where $\rho^\beta$ denotes the state distribution under the behavior policy $\beta$, while $\nabla_{\phi^\mu}Q\left(o_t,\mu\left(o_t\middle|\phi^\mu\right)\middle|\phi^Q\right)$ represents the gradient of the value function for the action chosen by the actor network in the state $o_t$.

The critic network $Q(o_t,x_t|\phi^Q)$ is defined by the parameters $\phi^Q$. It is trained by minimizing the mean squared error between the predicted Q-values and the target Q-values, with its loss function $L$ defined as
\begin{equation}
\begin{split}
L\left(\phi^Q\right)=\mathbb{E}_{\left(o_t,x_t,e_t,o_{t+1}\right)\sim\mathbb{D}}\left[\left(y_t-Q\left(o_t,x_t\middle|\phi^Q\right)\right)^2\right],\label{eq105}
\end{split}
\end{equation}
where $\mathbb{D}$ represents the experience replay buffer, and $y_t$ is the target Q-value.

The DDPG employs Temporal Difference (TD) learning to update the critic network, with the target value $y_t$ calculated based on the Bellman equation as
\begin{equation}
\begin{split}
y_t=e_t+\gamma Q\left(o_{t+1},\mu\left(o_{t+1}\middle|\phi^{\mu^\prime}\right)\middle|\phi^{Q^\prime}\right),
 \label{eq106}
\end{split}
\end{equation}
where $e_t$ is the immediate reward, $\gamma$ is the discount factor, and $\phi^{\mu^\prime}$ and $\phi^{Q^\prime}$ represent the parameters of the target actor network and target critic network, respectively. These target network parameters are softly updated at a slow rate $\tau$ towards the main network parameters $\phi^\mu$ and $\phi^Q$ to ensure stability during the training process
\begin{equation}
\begin{split}
\phi\gets\tau\phi+(1-\tau)\phi^\prime.
 \label{eq107}
\end{split}
\end{equation}

By alternately updating the actor and critic networks using the above loss function and gradient update formulas, the policy gradually converges to the optimal strategy that maximizes the cumulative reward.

The MADDPG-based solution for the Stackelberg game is presented in Algorithm~\ref{alg1}. In each training step, the MADDPG algorithm samples and stores experiences from $I$ agents and then samples a batch of $B$ experiences from the stored memories to train each agent. The MADDPG algorithm utilizes a deep neural network to construct both an actor network and a critic network, each comprising an input layer, a hidden layer, and an output layer. The primary factor influencing the time complexity is the dimensionality of the network architecture. Let $S$ represent the dimensionality of the state space, $L$ represent the dimensionality of the action space, and $H$ represent the number of neurons in the hidden layer. The computational complexity of the actor network is $O(SH + HL + H^2)$, while the complexity of the critic network is $O((S + L)H + H^2 + H)$. Since the target actor and critic networks have the same architecture, the complexity for a single agent is $O(2(SH + HL + H^2) + 2((S + L)H + H^2 + H))$. As a result, the overall time complexity of the algorithm is $O(2IB(2H^2 + 2(S + L)H + H))$.

In addition, as the number of participating nodes $I$ increases, the parallel structure of the actor–critic networks allow each agent to update its policy independently, while the centralized critic aggregates global feedback with a linear communication cost $O(I)$. This decentralized training and centralized evaluation architecture ensures that the system’s overall computational growth remains approximately linear rather than exponential. Regarding time varying network conditions, the continuous online update of the server’s strategy and each node’s update cycle enables adaptive re-optimization without retraining from scratch. The DRL agents learn dynamic mappings from observed states to actions, allowing the system to automatically adjust to variations in communication delays, node failures, or changing participation frequencies. Consequently, the proposed method maintains convergence stability and utility optimization even when applied to real world IIoT deployments with hundreds of heterogeneous nodes and dynamically evolving conditions.

\begin{table}[!t]
\caption{Training Parameters Information}
\begin{center}
	\begin{tabular}{|p{5cm}|p{1.5cm}|}
		\hline
		\textbf{Parameter}                      & \textbf{Set up}         \\ \hline
		Total number of nodes $I$                 & [5,25]          \\ \hline
		Duration from the end of data collection to the beginning of the next phase of data collection $a$                & [1,8]          \\ \hline
		Duration of training missions $T$                    & 10          \\ \hline
		Total model training time period $t$                    & 1  \\ \hline
		Update data volume $d$                        & [10,80]          \\ \hline
		Parameters of model quality $\rho $                        & [3,7]        \\ \hline
        Satisfaction Unit Profit $\beta $                &3       \\ \hline
	\end{tabular}
\label{tab2}
\end{center}
\end{table}

\section{Simulation Results}
\label{secVII}
To verify the impact of the proposed satisfaction-aware incentive scheme on FL in the IIoT, this paper conducts experiments using Python 3.7 and TensorFlow 1.15, and evaluates the performance of the scheme. The value ranges of the simulation parameters are shown in Table \ref{tab2}. We take image classification tasks as an example of IIoT applications. In the physical space, the robots perform image classification tasks without loss of generality. We use the MNIST dataset, consisting of 60,000 samples for the training set and 10,000 samples for the test set. To evaluate the performance of the proposed scheme, we compare it with four reinforcement learning algorithms: multi-agent proximal policy optimization‌ (MAPPO) \cite{b53}, multi-agent soft actor-critic (MASAC) \cite{b54}, and multi-agent deep Q-networks‌ (MADQN) \cite{b56}, which are PPO algorithm, SAC algorithm, and DQN algorithm in multi-agent environment.

\subsection{Satisfaction and Utility Analysis}
Fig. \ref{fig2} illustrates the impact of update cycles on satisfaction. In the early stages of the update cycle, the AoI decreases rapidly with an increase in the update cycle, significantly improving model quality and leading to a rapid increase in satisfaction. However, as the update cycle reaches a certain threshold, the impact of increasing latency becomes apparent, causing satisfaction to peak and then decline. This indicates the existence of an optimal update cycle for optimizing node satisfaction. Moreover, satisfaction decreases with an increase in parameter $a$ and increases with an increase in parameter $d$.

Fig. \ref{fig3} shows a comparative analysis of server utility under different incentive schemes with the maximum budget constraint ($R^{max}=50$). The utility of our scheme reaches its peak when 15 nodes are chosen, indicating an optimal balance between the number of nodes and the budget constraint. While increasing the number of nodes usually results in higher utilities, due to the budget limit, utilities decrease after the optimal point. 
However, this does not mean that our scheme cannot handle larger scale networks. The peak point corresponds to the equilibrium where the marginal utility of adding new participants is balanced by the fixed incentive budget. As more nodes join, the average reward per node and the available computational resources become diluted, leading to a gradual saturation of performance rather than degradation. Even as the network scales up, the proposed scheme maintains stable utility and balanced resource consumption under the given budget constraints.
The results suggest that our scheme consistently outperforms other strategies across all node numbers, indicating an optimized balance between cost and quality incentives to maximize utility within the given budget. The quality-first and price-first strategies show competitive performance, although neither dominates across all node numbers. The randomized pricing and randomized node combination strategies yield lower utility, which may suggest a less effective node selection or pricing model under these approaches.

\begin{figure}[!t]
	\centerline{\includegraphics[width=0.5\textwidth]{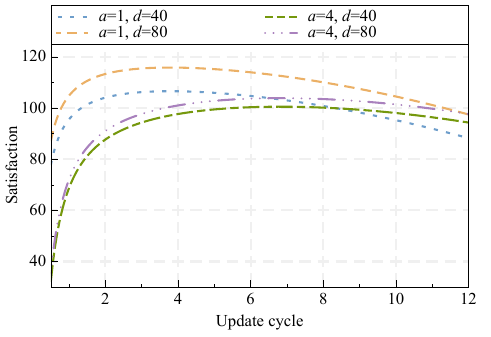}}
        \caption{Satisfaction for different update cycle.}
	\label{fig2}
\end{figure}

\begin{figure}[!t]
	\centerline{\includegraphics[width=0.5\textwidth]{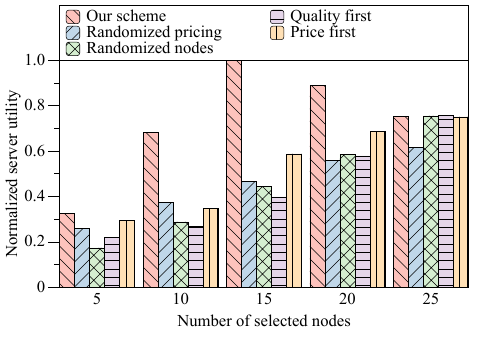}}
        \caption{Server utility for different schemes and numbers of selected nodes.}
	\label{fig3}
\end{figure}

\begin{figure*}[!t]
	\centering
    \begin{minipage}{0.8\textwidth}
        \centering
        \includegraphics[width=0.35\textwidth]{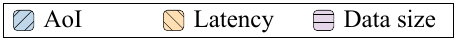}
    \end{minipage}
	\subfigure[$\tau=9, \lambda=1$]{
		\includegraphics[width=0.3\textwidth]{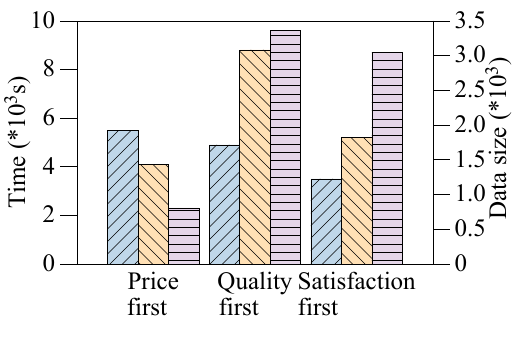}\label{fig10a}
	}
        \subfigure[$\tau=5, \lambda=5$]{
		\includegraphics[width=0.3\textwidth]{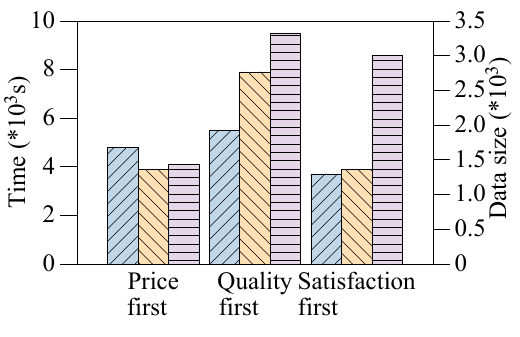}\label{fig10b}
	}
        \subfigure[$\tau=1, \lambda=9$]{
		\includegraphics[width=0.3\textwidth]{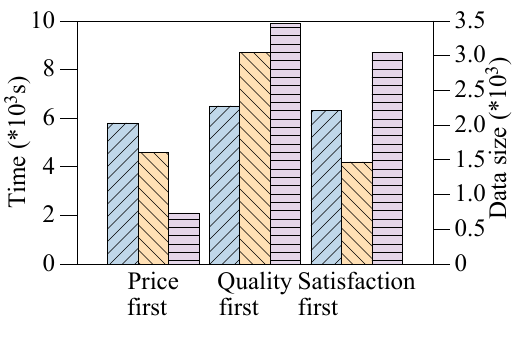}\label{fig10c}
	}
        \caption{Scheme performance under different preferences.}
	\label{fig10}
\end{figure*}

\begin{figure}[!t]
	\centerline{\includegraphics[width=0.5\textwidth]{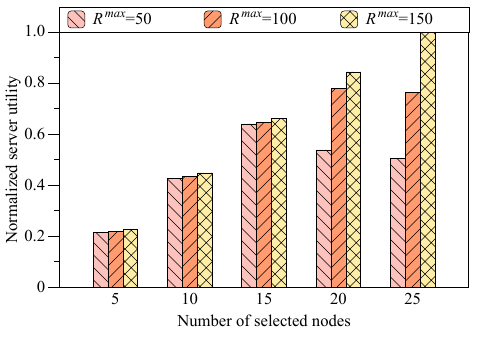}}
        \caption{Server utility for different number of selected nodes and budget constraints.}
	\label{fig5}
\end{figure}

\begin{figure}[!t]
	\centering
	\subfigure[]{
		\includegraphics[width=0.45\textwidth]{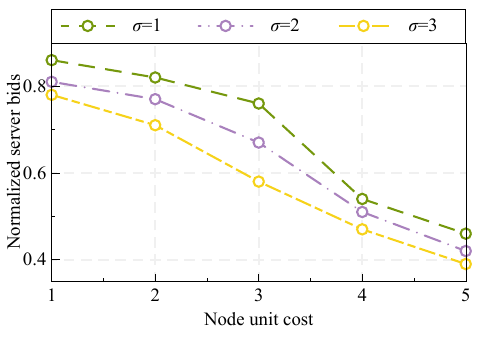}\label{fig6a}
	}
        \subfigure[]{
		\includegraphics[width=0.45\textwidth]{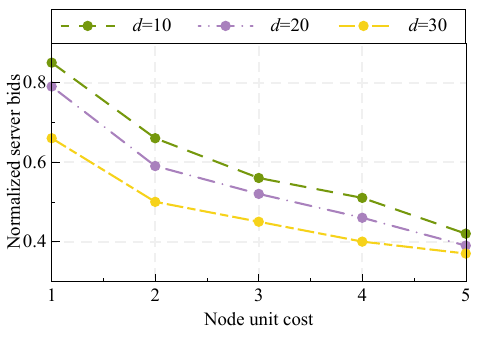}\label{fig6b}
	}
        \caption{Normalized server strategies for different parameters and unit cost of node.}
	\label{fig6}
\end{figure}

Fig. \ref{fig10} illustrates the performance comparison of different schemes under different preferences. Fig. \ref{fig10a} illustrates a situation where the user’s preference for quality far outweighs the concern for latency. With this preference, the satisfactionfirst strategy minimizes the value of AoI, indicating that the information is updated very frequently and that it is likely to perform well in providing quality services. At the same time, the price first strategy maintains the smallest data size, which indicates that the economy of data transmission is achieved even when quality is the primary concern. Fig. \ref{fig10b} illustrates the case where the user’s preference for both quality and latency is not strong. In this setting, the strategies are close in terms of AoI and latency, but the price first strategy performs the smallest in terms of data size, which is consistent with the observation in Fig. \ref{fig10a}. Fig. \ref{fig10c} shows the setting where users prefer low latency. The price first strategy performs minimally in terms of data size, which may imply that fewer data are transmitted in the price first strategy in order to reduce the cost. The quality first strategy has a lower AoI, indicating that it provides more frequent information updates.

Fig. \ref{fig5} illustrates the comparison of normalized server utility under different budget constraints. The figure shows three bar sets corresponding to different maximum budget constraints: $R^{max}=50$, $R^{max}=100$, and $R^{max}=150$. The number of selected nodes ranges from 5 to 25. As the number of selected nodes increases, the server utility generally increases under each budget constraint. However, the extent of the utility increase varies with the budget limit. For the lowest budget ($R^{max}=50$), the utility increases with the number of nodes but then levels off, indicating that beyond a certain point, additional nodes do not increase utility within the budget constraint. As the budget constraint increases from $R^{max}=100$ to $R^{max}=150$, utility continues to increase with the number of nodes, suggesting that higher budgets can effectively utilize more nodes to generate utility. The highest utility is observed under the constraint $R^{max}=150$, particularly with 25 nodes. This means that the server can utilize a higher budget to maximize its utility potential. The server's ability to generate utility is significantly affected by the number of nodes it can support, which in turn is limited by its budget.

Fig. \ref{fig6} illustrates the comparative analysis of server strategies under different conditions. The unit cost per node ranges from 1 to 5. Fig. \ref{fig6a} shows the server strategies under different unit satisfaction margins $\sigma$. As the node unit cost increases, the server strategies decrease for all values of $\sigma$. Higher satisfaction margins result in a sharp decrease in bids. This suggests that as the node cost increases, servers are inclined to bid higher for nodes with higher satisfaction margins. Fig. \ref{fig6b} shows the server's bid for different data sizes $d$. As in Fig. \ref{fig6a}, all three lines exhibit a decreasing trend, indicating that as node cost increases, the server's willingness to bid declines. This decrease is more significant at larger data volumes, implying that servers favor nodes with larger data sizes when bidding. Fig. \ref{fig6} shows that both unit satisfaction profit and data size significantly affect server strategy. As the node cost increases, servers tend to place lower bids, and higher satisfaction and larger data sizes amplify this effect.

\begin{figure}[t]
	\centerline{\includegraphics[width=0.5\textwidth]{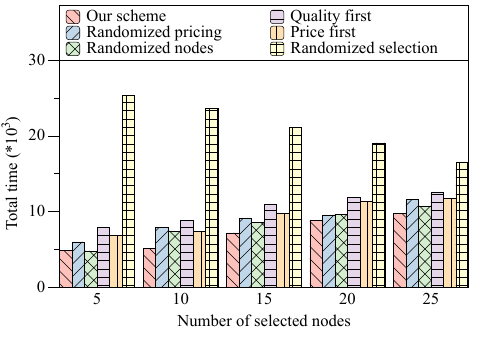}}
        \caption{FL training time for different number of selected nodes.}
	\label{fig8}
\end{figure}

\begin{figure}[!t]
	\centerline{\includegraphics[width=0.5\textwidth]{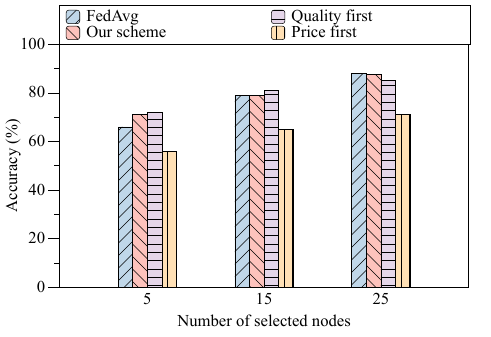}}
        \caption{FL accuracy for different number of selected nodes.}
	\label{fig9}
\end{figure}
\begin{figure}[!b]
	\centering
	\subfigure[]{
		\includegraphics[width=0.45\textwidth]{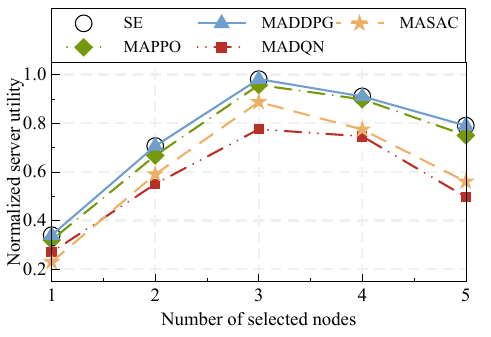}\label{fig15a}
	}
        \subfigure[]{
		\includegraphics[width=0.45\textwidth]{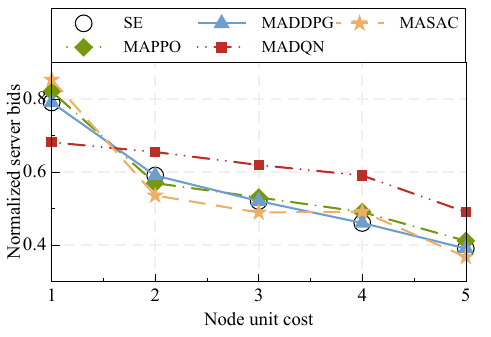}\label{fig15b}
	}
	\caption{Normalized server utility for different DRL algorithm and number of selected nodes.}
	\label{fig15}
\end{figure}

\subsection {FL Performance Comparison}
Fig. \ref{fig8} compares the total training time of different algorithms with varying total numbers of nodes. As shown in the figure, in the absence of incentives, the total training time of the global model decreases as the number of terminal devices increases. This is because the total computational resources increase as the number of devices increases. In contrast, in the incentive mechanism algorithm proposed in this paper, as the number of nodes increases, the budget available for each node decreases, which leads to a decrease in the total computational power, resulting in increased total training time. Notably, the randomized node combinations and the randomized selection algorithms show the most significant increase in training time as the node count grows, suggesting potential inefficiencies in these methods when dealing with larger sets of nodes. However, our scheme scales more efficiently, maintaining lower training times across all node numbers compared to the other strategies.

Fig. \ref{fig9} shows the accuracy comparison of FL under different schemes. The evaluation benchmarks the accuracy for 5, 15, and 25 nodes selected from a pool of 25 candidate nodes. The results indicate that our scheme achieves accuracy comparable to or better than FedAvg across all node sizes, suggesting that our scheme effectively manages the trade-offs between computation, communication, and model quality. The quality-first and price-first schemes show varying performance, with quality-first generally maintaining higher accuracy, suggesting that quality-first prioritizes model performance over cost. Conversely, price-first seems to prioritize cost-saving, potentially sacrificing accuracy.

\subsection{Solution Comparison}
Fig. \ref{fig15} compares the performance of different DRL algorithms in the Stackelberg game. In Fig. \ref{fig15a}, as the number of selected clients increases, the server utility exhibits a rise-then-fall trend, reaching its peak at 15 clients. This trend occurs because a moderate number of clients provides sufficient computational resources to optimize training performance.  However, as the number of clients continues to increase, the limited total budget results in less resource allocation per node, leading to a decline in overall utility. Of the four algorithms, MADDPG consistently achieves the highest utility, closely approximating the SE. In contrast, the utilities of MAPPO, MASAC, and MADQN are slightly lower, showing a noticeable gap. This demonstrates that MADDPG excels in balancing resource allocation and client selection to maximize server utility. In Fig. \ref{fig15b}, as the node unit cost increases, server strategies gradually decrease because the server must lower its bids to optimize resource allocation under budget constraints. The server strategies in MADDPG remain consistent with the SE, whereas the server strategies in MAPPO, MASAC, and MADQN diverge from the SE. Fig. \ref{fig15} shows that MADDPG consistently approximates the SE more closely than MAPPO, MASAC, and MADQN, indicating its superior performance in this Stackelberg game setting.

\section{Conclusion}
\label{secVIII}
In this paper, we have designed the MEFL framework for IIoT. The framework leverages meta-computing to enhance FL in IIoT. We have developed a satisfaction-aware incentive scheme within MEFL. This scheme integrates a satisfaction function that incorporates data size, AoI, and latency into the utility function, and formulates the utility optimization problem as a two-stage Stackelberg game. We employ a DRL-based approach to learn the equilibrium of the Stackelberg game, aiming to maximize the server's utility by identifying the optimal equilibrium. Experimental results demonstrate that, compared to traditional methods, the scheme effectively balances model quality and update latency through efficient resource allocation, enhances FL performance, ensures real-time capability and high efficiency of FL in IIoT, and better addresses the specific needs of industrial scenarios. However, the proposed MEFL still assumes rational and reliable participant behavior. Fully adversarial scenarios, such as intentional falsification of updates or coordinated manipulations, remain beyond the current scope. In future work, we plan to extend the framework by incorporating trust evaluation and anomaly detection mechanisms to further enhance its reliability and security in real-world IIoT environments.

\bibliographystyle{IEEEtran}
\bibliography{main}

\vspace{-30pt}

\begin{IEEEbiography}[{\includegraphics[width=1in,height=1.25in,clip,keepaspectratio]{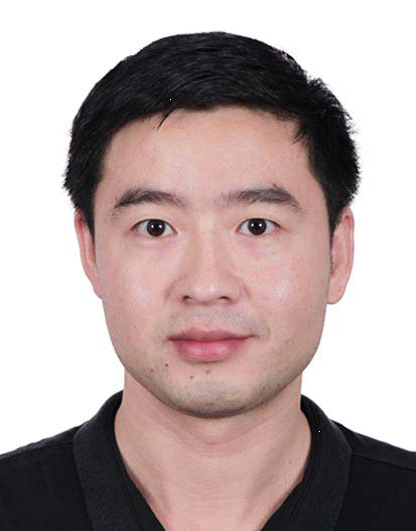}}]{Xiaohuan Li (Member, IEEE)}
was born in Chongqing, China. He received the B.Eng. and M.Sc. degrees from the Guilin University of Electronic Technology, Guilin, China, in 2006 and 2009, respectively, and the Ph.D. degree from the South China University of Technology, Guangzhou, China, in 2015. He was a Visiting Scholar with the Université de Nantes, France, in 2014. He is currently a Professor with the School of Information and Communication, Guilin University of Electronic Technology and Research fellow with the National Engineering Laboratory of Application Technology of Integrated Transportation Big Data (Beihang University). He is on the Editorial Boards of IEEE Internet of Things Journal and Journal on Communications. His current research interests include wireless sensor networks, vehicular networks, and cognitive radios.
\end{IEEEbiography}

\vspace{-20pt}

\begin{IEEEbiography}[{\includegraphics[width=1in,height=1.25in,clip,keepaspectratio]{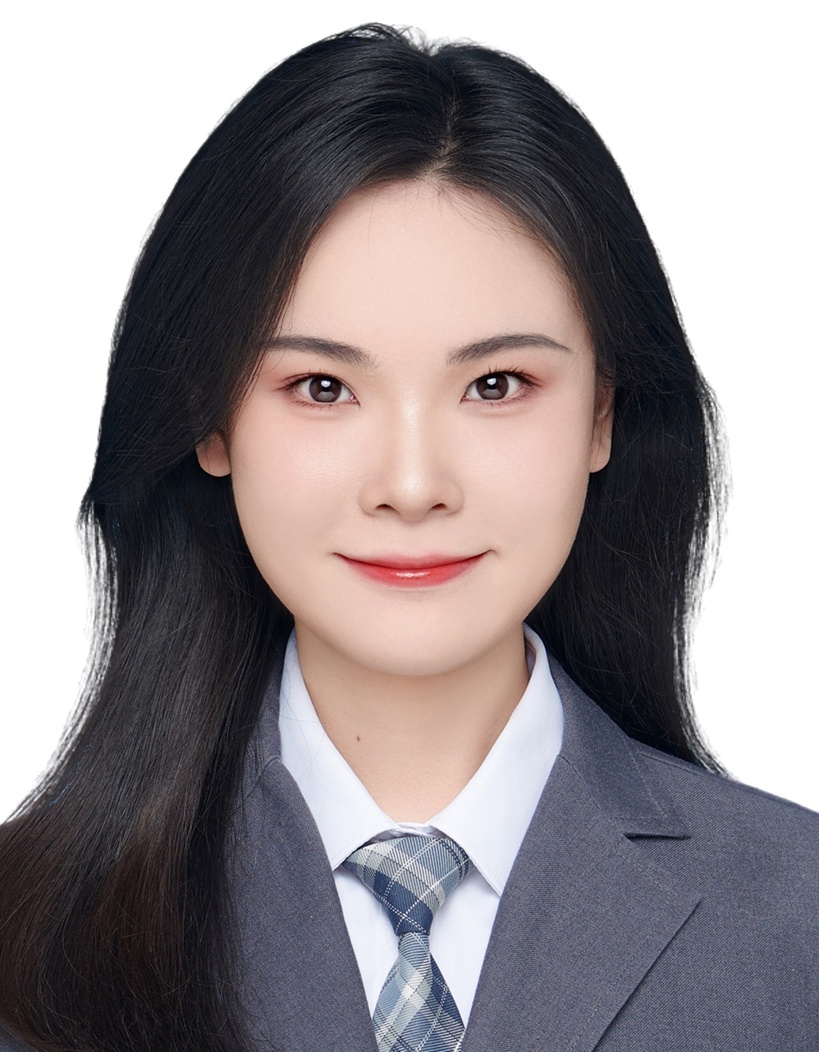}}]{Shaowen Qin}
received the B.S. degree from Guilin University of Electronic Technology, Guilin, China, in 2018, and the M.S. degree from Guangxi University, Nanning, China, in 2024. She is currently pursuing the Ph.D. degree in information and communication engineering at Guilin University of Electronic Technology. Her main research interests include federated learning and the industrial internet of things.
\end{IEEEbiography}


\begin{IEEEbiography}[{\includegraphics[width=1in,height=1.25in,clip,keepaspectratio]{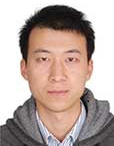}}]{Xin Tang}
received the B.S. and M.S. degrees in 2011 and 2015, respectively, from the Guilin University of Electronic Technology, Guilin, China, where he is currently working toward the Ph.D. degree in information and communication engineering. Since 2015, he has been with the China Mobile Communications Corporation Guangxi Branch. In 2016, he joined the Institute of Information Technology, Guilin University of Electronic Technology, where he is currently a Senior Engineer. He was engaged in a graduate study abroad program at Nanyang Technological University, Singapore, from July 2024 to July 2025. His research interests include edge computing, multi-agent systems, UAV networks, and intelligent transportation systems.
\end{IEEEbiography}

\vspace{-20pt}

\begin{IEEEbiography}[{\includegraphics[width=1in,height=1.25in,clip,keepaspectratio]{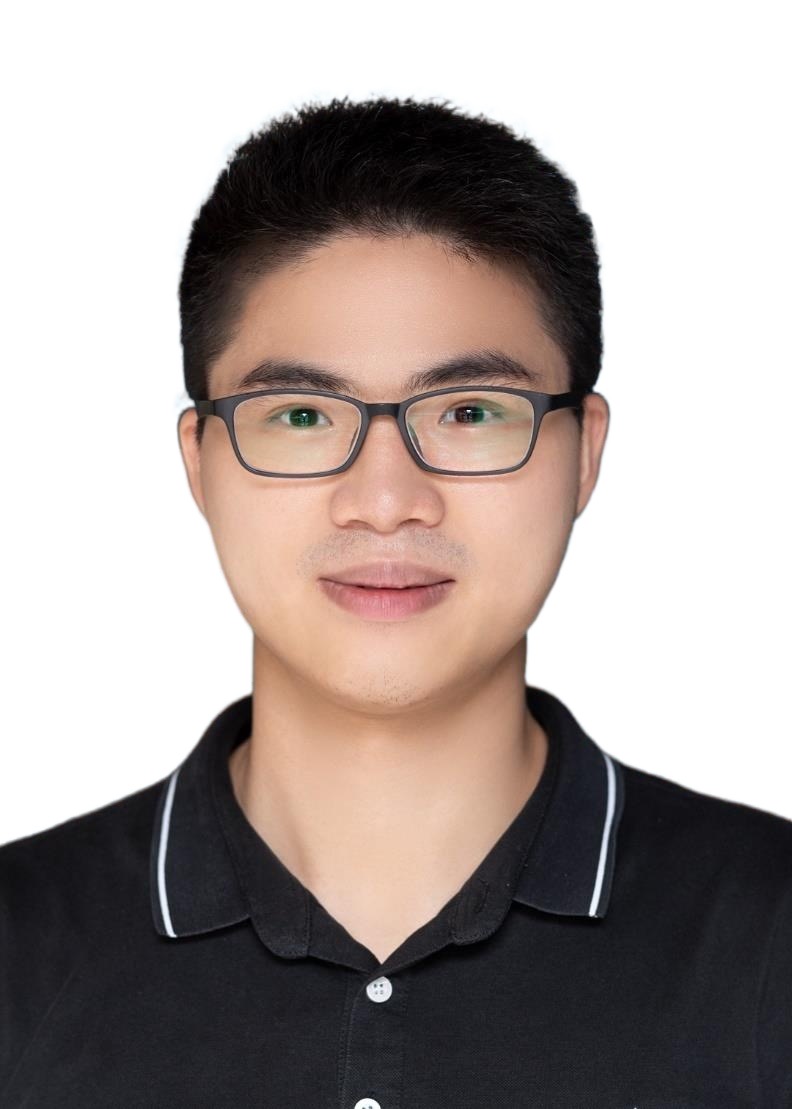}}]{Jiawen Kang}
received the Ph.D. degree from Guangdong University of Technology, China, in 2018. He has been a postdoc at Nanyang Technological University, Singapore from 2018 to 2021. He currently is a full professor at Guangdong University of Technology, China. His research interests mainly focus on blockchain, security, and privacy protection in wireless communications and networking.
\end{IEEEbiography}

\vspace{-20pt}

\begin{IEEEbiography}[{\includegraphics[width=1in,height=1.25in,clip,keepaspectratio]{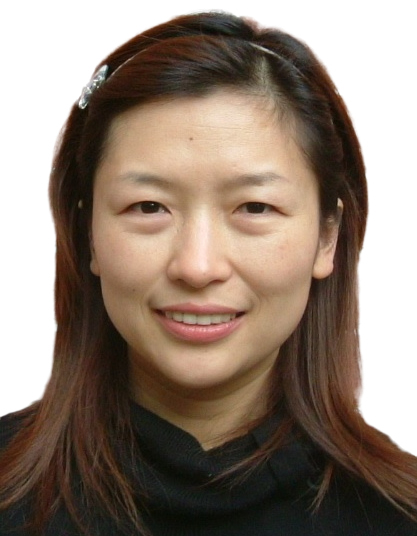}}]{Jin Ye}
received the Ph.D. degree with School of Science and Engineering, Central South University in 2008. She is currently a professor with school of Computer, Electronics and Information, Guangxi University. She worked as a visiting scholar in Department of Computer Science and Engineering at the University of Minnesota, Twin Cities in 2018. She also is the Member of China Computer Federation. Her main research interests include network protocol design, data center networks.
\end{IEEEbiography}

\vspace{-20pt}

\begin{IEEEbiography}[{\includegraphics[width=1in,height=1.25in,clip,keepaspectratio]{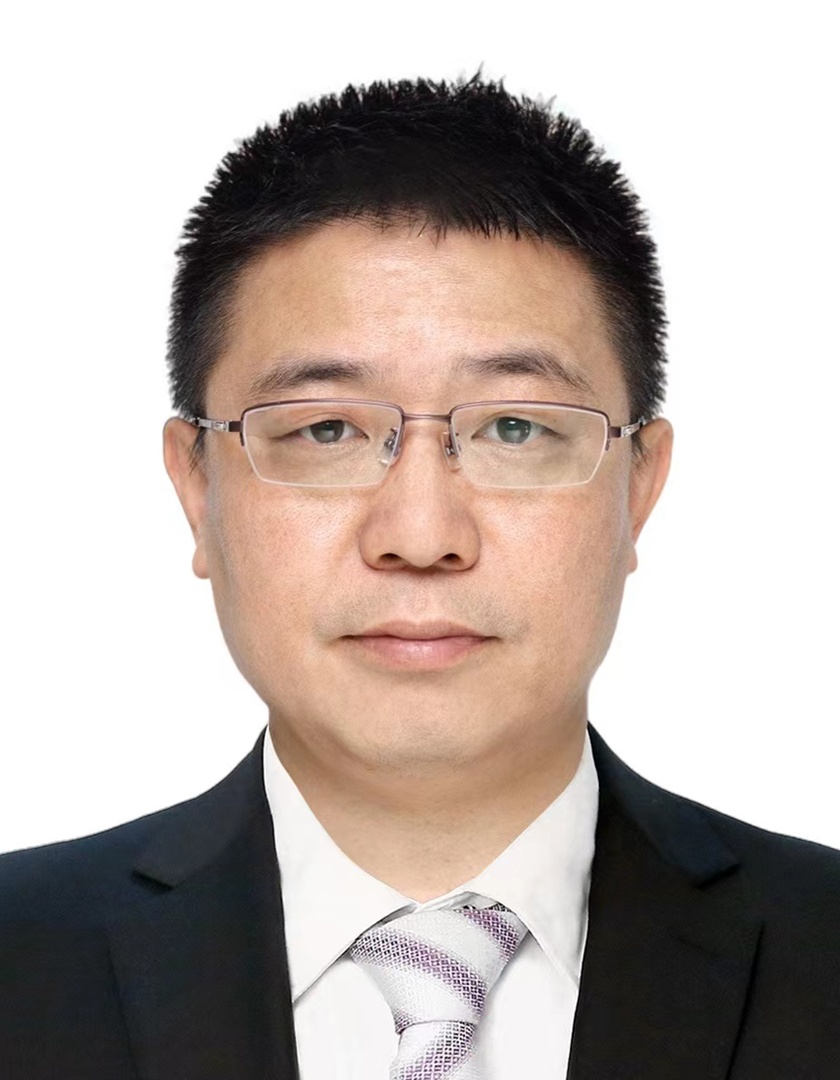}}]{Zhonghua Zhao}
received the M.S. degree from Guilin University of Electronic Technology, Guilin, China. He is currently a professor with the School of Information and Communication, Guilin University of Electronic Technology, Guilin, China, and has published 18 academic papers. His current research interests include vehicular networks, edge computing, and intelligent transportation systems.
\end{IEEEbiography}

\vspace{-20pt}

\begin{IEEEbiography}[{\includegraphics[width=1in,height=1.25in,clip,keepaspectratio]{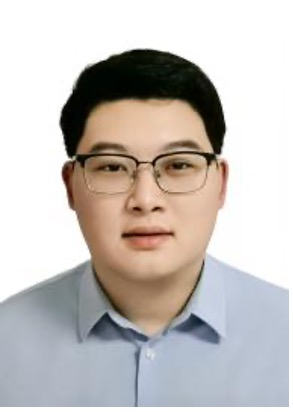}}]{Yusi Zheng}
received the B.S. degree from Guilin University of Electronic Technology, Guilin, China, in 2024, and the M.S. degree from Lingnan University, Hong Kong, China, in 2025.  He is currently pursuing the Ph.D. degree in information and communication engineering with the Guilin University of Electronic Technology. His research focuses on intelligent transportation systems, with applications in highway traffic event detection and UAV-based solutions.
\end{IEEEbiography}

\vspace{-20pt}

\begin{IEEEbiography}[{\includegraphics[width=1in,height=1.25in,clip,keepaspectratio]{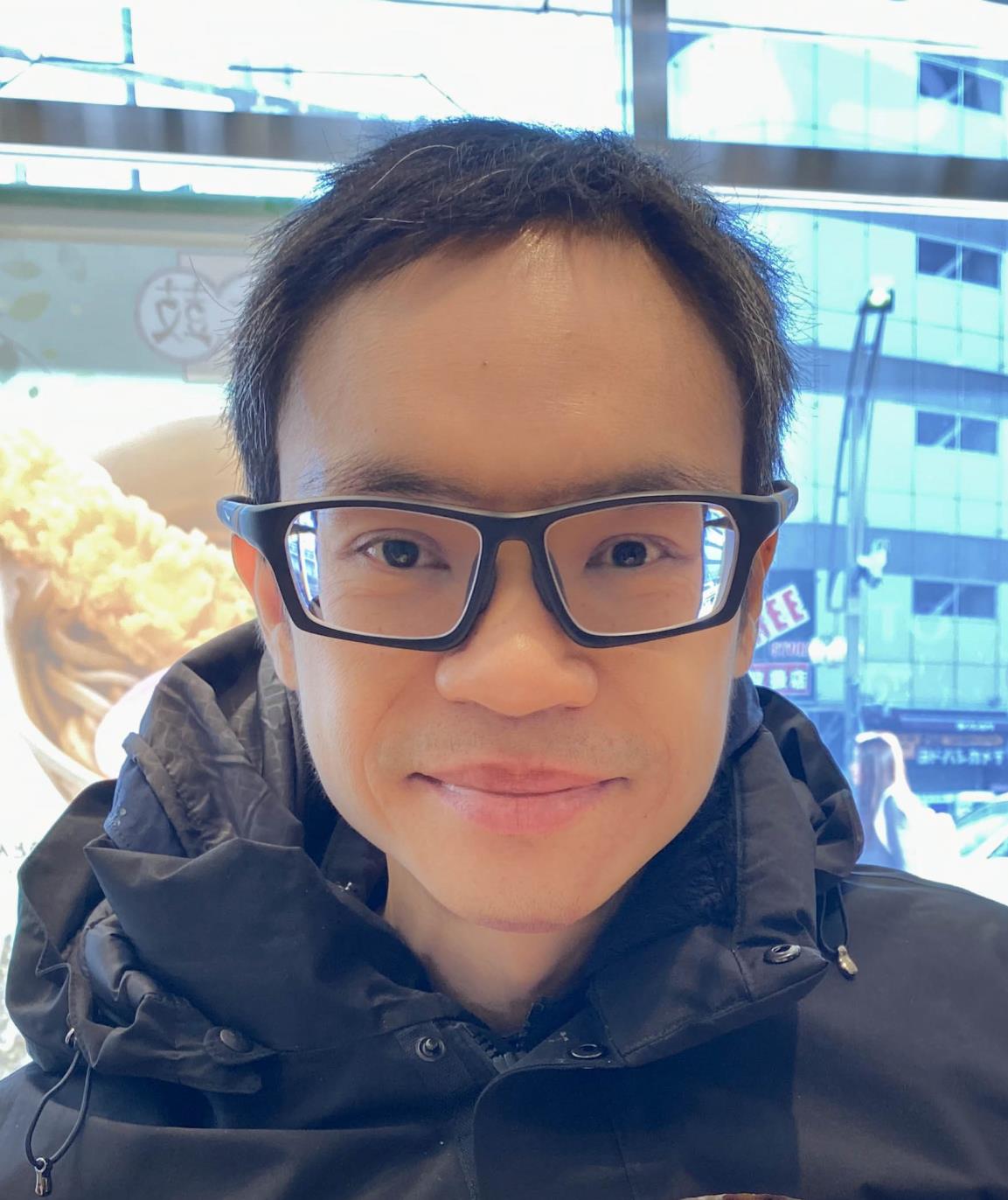}}]{Dusit Niyato (Fellow, IEEE)}
is a professor in the School of Computer Science and Engineering, at Nanyang Technological University, Singapore. He received B.Eng. from King Mongkuts Institute of Technology Ladkrabang (KMITL), Thailand in 1999 and Ph.D. in Electrical and Computer Engineering from the University of Manitoba, Canada in 2008. His research interests are in the areas of sustainability, edge intelligence, decentralized machine learning, and incentive mechanism design. 
\end{IEEEbiography}

\end{document}